\DeclareMathOperator*{\argmax}{arg\,max}
\DeclareMathOperator*{\argmin}{arg\,min}
\newtheorem{theorem}{Theorem}
\newtheorem{proposition}[theorem]{Proposition}
\begin{document}

\title{Optimizing Sensor Redundancy in Sequential Decision-Making Problems}

\author{\authorname{Jonas Nüßlein\sup{1}, Maximilian Zorn\sup{1}, Fabian Ritz\sup{1}, Jonas Stein\sup{1}, Gerhard Stenzel\sup{1}, Julian Schönberger\sup{1}, Thomas Gabor\sup{1} and Claudia Linnhoff-Popien\sup{1}}
\affiliation{\sup{1}Institute of Computer Science, LMU Munich, Germany}
\email{jonas.nuesslein@ifi.lmu.de}
}

\keywords{Reinforcement Learning, Sensor Redundancy, Robustness, Optimization}

\abstract{
Reinforcement Learning (RL) policies are designed to predict actions based on current observations to maximize cumulative future rewards. In real-world applications (i.e., non-simulated environments), sensors are essential for measuring the current state and providing the observations on which RL policies rely to make decisions.
A significant challenge in deploying RL policies in real-world scenarios is handling sensor dropouts, which can result from hardware malfunctions, physical damage, or environmental factors like dust on a camera lens. A common strategy to mitigate this issue is the use of backup sensors, though this comes with added costs. 
This paper explores the optimization of backup sensor configurations to maximize expected returns while keeping costs below a specified threshold, $C$. Our approach uses a second-order approximation of expected returns and includes penalties for exceeding cost constraints. We then optimize this quadratic program using Tabu Search, a meta-heuristic algorithm. The approach is evaluated across eight OpenAI Gym environments and a custom Unity-based robotic environment (\textit{RobotArmGrasping}). Empirical results demonstrate that our quadratic program effectively approximates real expected returns, facilitating the identification of optimal sensor configurations.
}

\onecolumn \maketitle \normalsize \setcounter{footnote}{0} \vfill

\section{\uppercase{Introduction}}

Reinforcement Learning (RL) has emerged as a prominent technique for solving sequential decision-making problems, paving the way for highly autonomous systems across diverse fields. From controlling complex physical systems like tokamak plasma \cite{degrave2022magnetic} to mastering strategic games such as Go \cite{silver2018general}, RL has demonstrated its potential to revolutionize various domains. However, when transitioning from controlled environments to real-world applications, RL faces substantial challenges, particularly in dealing with sensor dropouts. In critical scenarios, such as healthcare or autonomous driving, the failure of sensors can result in suboptimal decisions or even catastrophic outcomes.

The core of RL decision-making lies in its reliance on continuous, accurate observations of the environment, often captured through sensors. When these sensors fail to provide reliable data—whether due to hardware malfunctions, environmental interference, or physical damage—the performance of RL policies can degrade significantly \cite{dulac2019challenges}. This vulnerability highlights the importance of addressing sensor dropout to ensure the robustness of RL systems in real-world applications.

In domains such as aerospace, healthcare, nuclear energy, and autonomous vehicles, the consequences of sensor failures are particularly severe. A compromised sensor network in these fields can endanger lives, harm the environment, or result in costly failures. Thus, improving the resilience of RL systems to sensor dropouts is not just desirable—it is critical for ensuring their safe and reliable deployment in mission-critical applications.

One common approach to mitigate the risks of sensor dropouts is the implementation of redundant backup sensors. These backup systems provide an additional layer of security, stepping in when primary sensors fail, thereby maintaining the availability of crucial data. While redundancy can enhance system resilience, it also introduces significant costs, and not all sensor dropouts result in performance degradation severe enough to justify the investment in backups.

This paper presents a novel approach to optimizing backup sensor configurations in RL-based systems. Our method focuses on balancing the trade-off between system performance—quantified by expected returns in a Markov Decision Process (MDP)—and the costs associated with redundant sensors. Specifically, we aim to identify the most effective sensor configurations that maximize expected returns while ensuring costs remain within a specified threshold $C$. To the best of our knowledge, this is the first study to address the optimization of sensor redundancy in sequential decision-making environments. \footnote{This publication was created as part of the Q-Grid project (13N16179) under the “quantum technologies – from basic research to market” funding program, supported by the German Federal Ministry of Education and Research.}
\ \\
\ \\
The structure of this paper is organized as follows: First, we provide the \textbf{Background}, where we review key concepts, including Markov Decision Processes (MDPs) and the Quadratic Unconstrained Binary Optimization (QUBO) framework, which underpins our approach. In the \textbf{Algorithm} section, we formally define the problem of optimizing backup sensor configurations and introduce \textit{SensorOpt}, our proposed algorithm for solving this problem. Next, we discuss \textbf{Related Work}, situating our research within the broader context of RL-based decision-making, sensor reliability, and optimization strategies. The \textbf{Experiments} section begins with a proof-of-concept, demonstrating that our quadratic approximation accurately estimates the real expected return for different backup sensor configurations. We then present the results of our main experiments, which involve testing our approach across eight OpenAI Gym environments, followed by an evaluation in our custom-built Unity-based robotic arm environment, \textit{RobotArmGrasping}. These experiments validate the effectiveness and generalizability of our method.

\section{\uppercase{Background}}

\subsection{Markov Decision Processes (MDP)}

Sequential Decision-Making problems are frequently modeled as Markov Decision Processes (MDPs). An MDP is defined by the tuple $E = \langle S, A, T, r, p_0, \gamma \rangle$, where $S$ represents the set of states, $A$ is the set of actions, and $T(s_{t+1}\:|\:s_t,a_t)$ is the probability density function (pdf) that governs the transition to the next state $s_{t+1}$ after taking action $a_t$ in state $s_t$. The process is considered Markovian because the transition probability depends only on the current state $s_t$ and the action $a_t$, and not on any prior states $s_{\tau < t}$.

The function $r : S \times A \rightarrow \mathbb{R}$ assigns a scalar reward to each state-action pair $(s_t, a_t)$. The initial state is sampled from the start-state distribution $p_0$, and $\gamma \in [0,1)$ is the discount factor, which applies diminishing weight to future rewards, giving higher importance to immediate rewards.

A deterministic \textit{policy} $\pi : S \rightarrow A$ is a mapping that assigns an action to each state. The \textit{return} $R = \sum_{t=0}^{\infty} \gamma^{\;t} \cdot r(s_t, a_t)$ is the total (discounted) sum of rewards accumulated over an episode. The objective, typically addressed by Reinforcement Learning (RL), is to find an optimal policy $\pi^*$ that maximizes the expected cumulative return:

\begin{equation*}
\pi^* = \underset{\pi}{\arg\max} \:\: \mathbb{E}_{p_0} \: \left[\sum_{t=0}^{\infty} \gamma^{\;t} \cdot r(s_t, a_t) \: | \: \pi \right]
\end{equation*}

Actions $a_t$ are selected according to the policy $\pi$. In Deep Reinforcement Learning (DRL), where state and action spaces can be large and continuous, the policy $\pi$ is often represented by a neural network $\hat{f_{\phi}}(s)$ with parameters $\phi$, which are learned through training \cite{sutton2018reinforcement}.

\subsection{Quadratic Unconstrained Binary Optimization (QUBO)}

Quadratic Unconstrained Binary Optimization (QUBO) ~\cite{10} is a combinatorial optimization problem defined by a symmetric, real-valued $(m \times m)$ matrix $Q$, and a binary vector $x \in \mathbb{B}^m$. The objective of a QUBO problem is to minimize the following quadratic function:

\begin{equation}
x^* = \underset{x}{\arg\min} \: H(x) = \underset{x}{\arg\min} \: \sum_{i=1}^{m}\sum_{j=i}^{m}{x_i x_j Q_{ij}}
\end{equation}

The function $H(x)$ is commonly referred to as the \textit{Hamiltonian}, and in this paper, we refer to the matrix $Q$ as the ``QUBO matrix''.

The goal is to find the optimal binary vector $x^*$ that minimizes the Hamiltonian \cite{roch2023effect}. This task is known to be \textit{NP}-hard \cite{glover2018tutorial}, making it computationally intractable for large instances without specialized techniques. QUBO is a significant problem class in combinatorial optimization, as it can represent a wide range of problems. Moreover, several specialized algorithms and hardware platforms, such as quantum annealers and classical heuristics, have been designed to solve QUBO problems efficiently \cite{morita2008mathematical,farhi2014quantum,farhi2016quantum,nusslein2023solving,zielinski2023influence,nusslein2023black}.

Many well-known combinatorial optimization problems, such as Boolean satisfiability (SAT), the knapsack problem, graph coloring, the traveling salesman problem (TSP), and the maximum clique problem, have been successfully reformulated as QUBO problems~\cite{8,12,13,bucher2023dynamic,15,16}. This versatility makes QUBO a powerful tool for solving a wide variety of optimization tasks, including the one addressed in this paper.

\section{\uppercase{Algorithm}}

\subsection{Problem Definition}

Let $\pi$ be a trained agent operating within an MDP $E$. At each timestep, the agent receives an observation $o \in \mathbb{R}^u$, which is collected using $n$ sensors $\{s_i\}_{1 \leq i \leq n}$. Each sensor $s_i$ produces a vector $o^i$, and the full observation $o$ is formed by concatenating these vectors: $o = [o^1, o^2, \dots, o^n]$. The complete observation $o$ has dimension $|o| = \sum_i |o^i|$.

At the start of an episode, each sensor $s_i$ may drop out with probability $d_i \in [0,1]$, meaning it fails to provide any meaningful data for the entire episode. In the event of a dropout, the sensor’s output is set to $o^i = \textbf{0}$ for the rest of the episode.

If $\pi$ is represented as a neural network, it is evident that the performance of $\pi$, quantified as the expected return, will degrade as more sensors drop out. To mitigate this, we can add backup sensors. However, incorporating a backup sensor $s_i$ incurs a cost $c_i \in \mathbb{N}$, representing the expense associated with adding that redundancy.

The task is to find the optimal backup sensor configuration that maximizes the expected return while keeping the total cost of the backups within a predefined budget $C \in \mathbb{N}$. Let $x \in \mathbb{B}^n$ be a vector of binary decision variables, where $x_i = 1$ indicates the inclusion of a backup for sensor $s_i$. If $\mathbb{E}_{d,\pi,x}[R]$ represents the expected return while using backup configuration $x$, the optimization problem can be formulated with both a soft and hard constraint:

\begin{equation}
x^* = \underset{x}{\argmax} \: \mathbb{E}_{d,\pi,x}[R] \quad \text{s.t.} \quad \sum_{i=1}^n x_i c_i \leq C
\end{equation}
\ \\
An illustration of this problem is provided in \textit{Figure 1}. In the next section, we describe a method to transform this problem into a QUBO representation.

\subsection{SensorOpt}

There are potentially $2^n$ possible combinations of sensor dropout configurations, making it computationally expensive to evaluate all of them directly. Since we only have a limited number of episodes $B \in \mathbb{N}$ to sample from the environment $E$, we opted for a second-order approximation of $\mathbb{E}_{d,\pi,x}[R]$ to reduce the computational burden while still capturing meaningful interactions between sensor dropouts.

To compute this approximation, we first calculate the probability $q(d)$ that at most two sensors drop out in an episode:

\begin{equation*}
\begin{aligned}
    q(d) &= \prod_{i=1}^{n} (1 - d_i) + \sum_{i=1}^{n} d_i \cdot \prod_{j \neq i}^{n} (1 - d_j) \\
    &+ \sum_{i=1}^{n} \sum_{j < i}^{n} d_i d_j \cdot \prod_{l \neq i,j}^{n} (1 - d_l)
\end{aligned}
\end{equation*}

Next, we estimate the expected return $\hat{R}_{(i,j)}$ for each pair of sensors $(s_i, s_j)$ dropping out. This can be efficiently achieved using \textbf{Algorithm 1}, which samples the episode return $E^{(i,j)}(\pi)$ using policy $\pi$ with sensors $s_i$ and $s_j$ removed. The algorithm begins by calculating two return values for each sensor pair. It then iteratively selects the sensor pair with the highest \textit{momentum} of the mean return, defined as $|\overline{R_{(i,j)}[:-1]} - \overline{R_{(i,j)}}|$. Here, $\overline{R_{(i,j)}[:-1]}$ denotes the mean of all previously sampled returns for the sensor pair $(s_i, s_j)$, excluding the most recent return, while $\overline{R_{(i,j)}}$ represents the mean including the most recent return. The difference between these two values measures the momentum, capturing how much the expected return is changing as more episodes are sampled.

We base the selection of sensor pairs on momentum rather than return variance to differentiate between aleatoric and epistemic uncertainty \cite{valdenegro2022deeper}. Momentum is less sensitive to aleatoric uncertainty, making it a more reliable indicator in this context.

A simpler, baseline approach to estimate $\hat{R}_{(i,j)}$ would be to divide the budget $B$ of episodes evenly across all sensor pairs $(s_i, s_j)$. In this \textit{Round Robin} approach, we sample $k = \frac{B}{n(n+1)/2}$ episodes for each sensor pair and estimate $\hat{R}_{(i,j)}$ as the empirical mean. In the \textit{Experiments} section, we compare our momentum-based approach from \textit{Algorithm 1} against this naive Round Robin method to highlight its efficiency and accuracy.

Once we have estimated all the values of $\hat{R}_{(i,j)}$, we can compute the overall expected return $\hat{R}(d)$ for a given dropout probability vector $d$, again using a second-order approximation:

\begin{figure*}[t]
\centering
\minipage{1\textwidth}
  \centering
  \includegraphics[width=\linewidth]{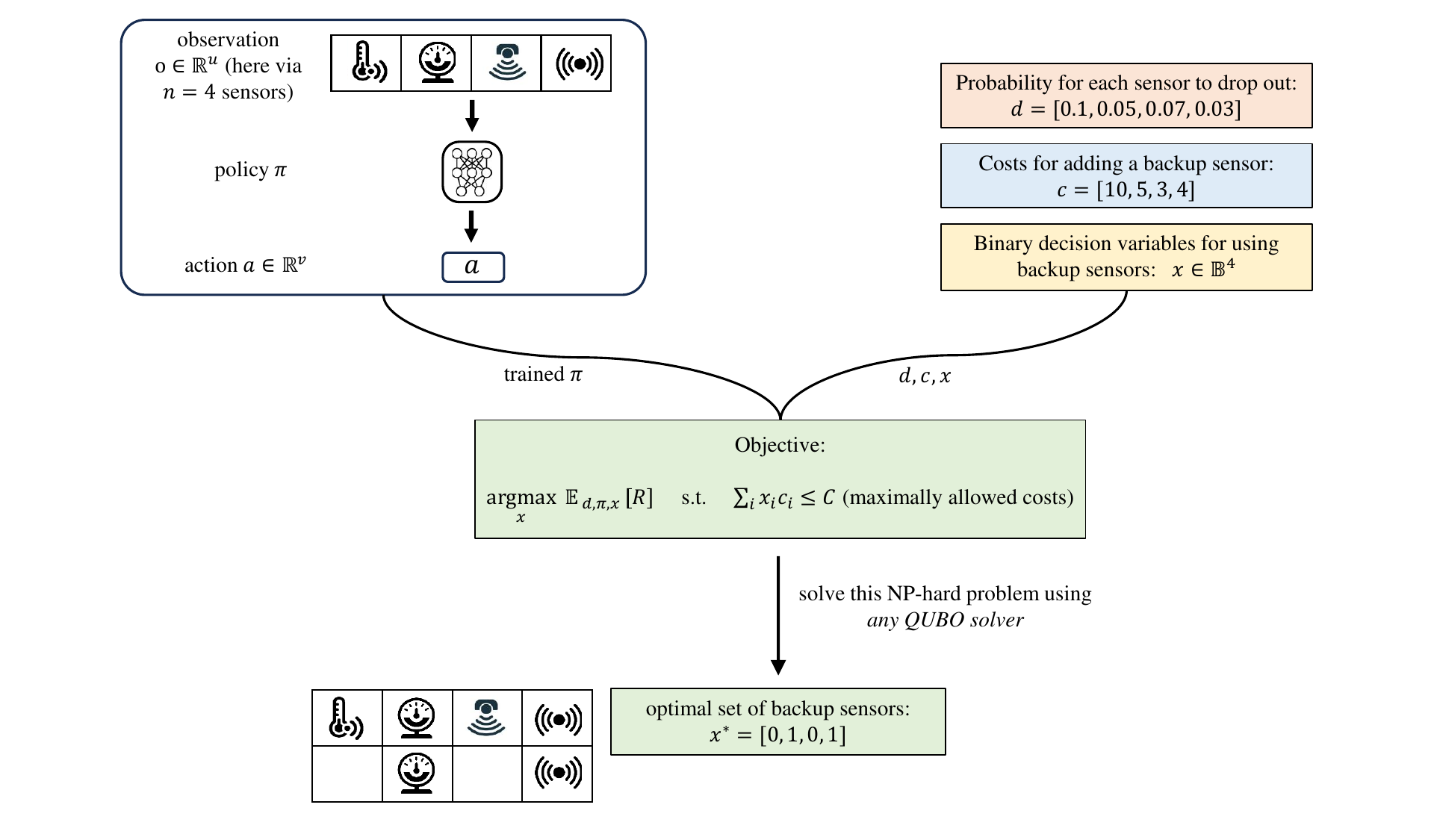}
  \caption{An illustration of our approach for optimizing the backup sensor configuration.}\label{fig:main_charts}
\endminipage
\vskip 0.2in
\end{figure*}

\begin{equation*}
\begin{split}
    \hat{R}(d) &= \frac{1}{q(d)} \left[ \hat{R}_0 \prod_{i=1}^{n} (1 - d_i) + \sum_{i=1}^{n} \hat{R}_{(i,i)} d_i \prod_{j \neq i}^{n} (1 - d_j) \right. \\
    &\quad + \left. \sum_{i=1}^{n} \sum_{j < i}^{n} \hat{R}_{(i,j)} d_i d_j \prod_{l \neq i,j}^{n} (1 - d_l) \right]
\end{split}
\end{equation*}
\ \\
Now we can calculate the advantage of using a backup for sensor $s_i$ vs. using no backup sensor for $s_i$:

\begin{equation*}
\Delta \hat{R}_{(i,i)}(d) = \hat{R}(d^{\{i\}}) -  \hat{R}(d)
\end{equation*}

with

\[
d^A =
\begin{cases}
d_i^2 & \text{if } i \in A, \\
d_i & \text{if } i \notin A.
\end{cases}
\]
\ \\
When a backup sensor is used for sensor $s_i$, the new dropout probability for $s_i$ becomes $d_i \gets d_i^2$, since the system will only fail to provide an observation if both the primary and backup sensors drop out. We adopt the notation $d^A$ to represent the updated dropout probabilities when backup sensors from set $A$ are in use. For example, if the dropout probability vector is $d = (0.2, 0.5, 0.1)$ and a backup is used for sensor 1, the updated vector becomes $d^{{1}} = (0.2, 0.25, 0.1)$, as the backup reduces the dropout probability for sensor 1 while sensors 0 and 2 remain unchanged.

\vskip 0.1in
\begin{algorithm}[h]
   \caption{Estimating $\hat{R}_{(i,j)}$}
   \label{alg:example}
\begin{algorithmic}
   \STATE {\bfseries Input:} budget of episodes to run $B \in \mathbb{N}$
   \STATE \:\:\:\:\:\:\:\:\:\:\:\:\: $\#$ sensors $n \in \mathbb{N}$
   \STATE \:\:\:\:\:\:\:\:\:\:\:\:\: MDP $E$
   \STATE \:\:\:\:\:\:\:\:\:\:\:\:\: trained policy $\pi$
   \STATE
   \STATE $R_{(i,j)} = [ E^{(i,j)}(\pi), E^{(i,j)}(\pi)$] \:\:\: $\forall \: i<j, (i,j) \in [1,n]^2 $
   \STATE 
   \FOR{$e=1$ {\bfseries to} $B - n(n+1)$}
   \STATE $(i,j) = \underset{(i,j)}{\argmax} \:\:\: | \: \overline{R_{(i,j)}[:-1]} - \overline{R_{(i,j)}} \: |$
   \STATE $R_{(i,j)}$.append$\big( E^{(i,j)}(\pi) \big)$
   \ENDFOR
   \STATE $\hat{R}_{(i,j)} = \overline{R_{(i,j)}}$ \hfill // {expected return = mean} 
   \STATE \hfill {empirical return}
   \STATE {\bfseries return} $\:$ all $\hat{R}_{(i,j)}$ \hfill
\end{algorithmic}
\end{algorithm}

We can now compute the joint advantage of using backups for both sensors $s_i$ and $s_j$. This joint advantage is not simply the sum of the individual advantages of using backups for each sensor in isolation:

\[
\Delta \hat{R}_{(i,j)}(d) = \hat{R}(d^{\{i,j\}}) -  \hat{R}(d) - \Delta \hat{R}_{(i,i)}(d) - \Delta \hat{R}_{(j,j)}(d)
\]

This formula accounts for the interaction between the two sensors, reflecting the fact that their joint contribution to the expected return is influenced by how the presence of both backup sensors affects the system as a whole, beyond just the sum of their individual contributions. This interaction term is crucial when optimizing backup sensor configurations, as it helps capture the diminishing or synergistic returns that can occur when multiple sensors are backed up together.

With these components in hand, we can now formulate the QUBO problem for the sensor optimization task defined in (2). The soft constraint $H^{soft}$ captures the optimization of the expected return, approximated by the change in return $\Delta \hat{R}_{(i,i)}(d)$ for individual sensors and $\Delta \hat{R}_{(i,j)}(d)$ for sensor pairs:

\[
H^{soft} = \sum_{i=1}^{n} x_i \cdot \Delta \hat{R}_{(i,i)}(d) + \sum_{i < j}^{n} x_i x_j \cdot \Delta \hat{R}_{(i,j)}(d)
\]

Here, $x_i$ are binary decision variables where $x_i = 1$ indicates that a backup sensor is added for sensor $s_i$. This formulation ensures that the algorithm optimizes the expected return based on the specific backup configuration.

To enforce the cost constraint, we introduce a hard constraint $H^{hard}$, which penalizes configurations where the total cost exceeds the specified budget $C$:

\[
H^{hard} = {\left(\sum_{i=1}^{n} x_i c_i + \sum_{i=n+1}^{n+1+\lceil \log_2(C) \rceil} x_i 2^{(i-n-1)} - C\right)}^2
\]

The expression $\sum_{i=1}^{n} x_i c_i$ represents the total cost of the selected backup sensors, and the second term accounts for the binary encoding of the cost constraint, ensuring that no configuration exceeds the budget $C$.

Finally, we introduce a scaling factor $\alpha$ to balance the contributions of the soft and hard constraints:

\[
\alpha = \sum_{i=1}^{n} | \Delta \hat{R}_{(i,i)}(d) | + \sum_{i < j}^{n} | \Delta \hat{R}_{(i,j)}(d) |
\]
\ \\
The overall QUBO objective function is then given by:
\begin{equation}
H = -H^{soft} + \beta \cdot \alpha \cdot H^{hard}
\end{equation}
\ \\
Here, $H^{soft}$ approximates the expected return $\mathbb{E}_{d,\pi,x}[R]$, while $H^{hard}$ ensures that the total cost remains within the allowable budget. The parameter $\beta$ is a hyperparameter that controls the trade-off between maximizing the expected return and enforcing the cost constraint.

Algorithm \textbf{SensorOpt} summarizes our approach for optimizing sensor redundancy configurations. This algorithm uses the QUBO formulation to find the optimal backup sensor configuration, balancing performance improvements with budget constraints.

\begin{algorithm}[h]
   \caption{\textbf{- SensorOpt}: Calculating best sensor backup configuration}
   \label{alg:example}
\begin{algorithmic}
   \STATE {\bfseries Input:} $\#$ sensors $n \in \mathbb{N}$
   \STATE \:\:\:\:\:\:\:\:\:\:\:\:\: dropout probability for each sensor $d \in [0,1]^n$
   \STATE \:\:\:\:\:\:\:\:\:\:\:\:\: costs for each backup sensor $c \in \mathbb{N}$
   \STATE \:\:\:\:\:\:\:\:\:\:\:\:\: maximally allowed costs $C \in \mathbb{N}$
   \STATE \:\:\:\:\:\:\:\:\:\:\:\:\: budget of episodes to run $B \in \mathbb{N}$
   \STATE \:\:\:\:\:\:\:\:\:\:\:\:\: MDP $E$
   \STATE \:\:\:\:\:\:\:\:\:\:\:\:\: trained policy $\pi$
   \STATE
   \STATE $\hat{R}_0 \gets E(\pi)$ \hfill // {expected return with no sensor dropouts}
   \STATE $\hat{R}_{(i,j)} \gets $ use \textbf{Algorithm 1} $(B, n, E, \pi)$
   \STATE $Q \gets$ create QUBO-matrix using \textbf{formula (3)}
   \STATE
   \STATE $x^* = \underset{x}{\argmin} \:\: x^T Q x$ \hfill // {solve using any QUBO solver,}
   \STATE \hfill {e.g. Tabu Search}
   \STATE \hfill  // {$x \in \mathbb{B}^m$, \:\: $m = n + \lceil log_2(C) \rceil$}
   \STATE 
   \STATE {\bfseries return} $\: x^*[:n]$ \hfill // {first $n$ bits of $x^*$ encode the optimal}
   \STATE \hfill {sensor backup configuration}
\end{algorithmic}
\end{algorithm}

For a given problem instance $(n, d, c, C, B, E, \pi)$, $\alpha$ is a constant while $\beta \in [0,1]$ is a hyperparameter we need to choose by hand. The number of boolean decision variables our approach needs is determined by $m = n + \lceil log_2(C) \rceil$.
\ \\

\begin{figure*}[t!]
\centering
\minipage{0.85\textwidth}
  \centering
  \includegraphics[width=\linewidth]{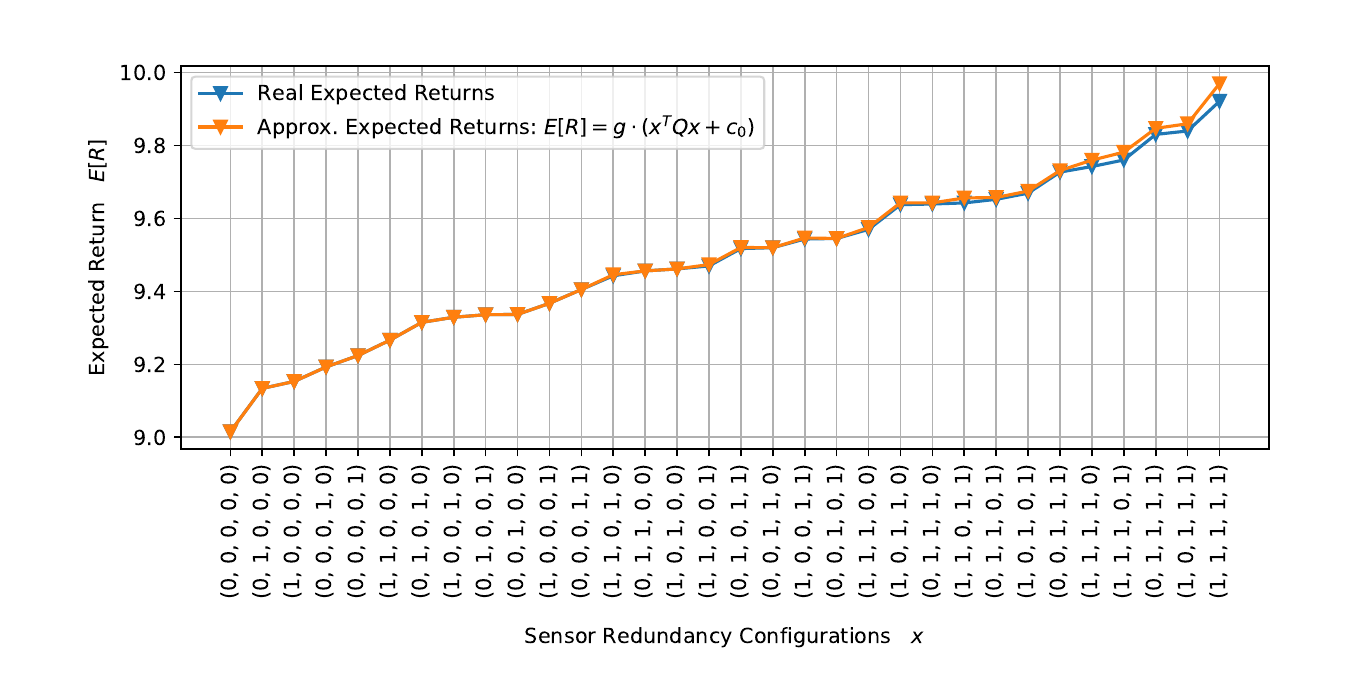}
  \caption{Proof-of-concept: this plot shows the real expected return and the approximated expected return $\mathbb{E}_{d,\pi,x}[R] \approx - \: x^T \: Q \: x + \hat{R}(d)$ when using a backup sensor configuration $x$. The configurations (x-axis) are sorted according to the real return.}\label{fig:main_charts}
\endminipage
\end{figure*}

\begin{proposition}
The optimization problem described in $(2)$ belongs to complexity class \textit{NP}-hard.
\end{proposition}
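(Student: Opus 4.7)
The plan is to prove NP-hardness by a polynomial-time reduction from the classical $0$-$1$ Knapsack problem. The cost constraint $\sum_i x_i c_i \leq C$ in $(2)$ already has exactly the form of a knapsack budget, so the only substantive step is to exhibit an MDP and policy in which the expected return $\mathbb{E}_{d,\pi,x}[R]$, viewed as a function of $x$, coincides with the knapsack value $\sum_i v_i x_i$ up to an additive constant and a strictly positive multiplicative factor.

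Given a Knapsack instance with values $v_i \in \mathbb{N}$, weights $w_i \in \mathbb{N}$, and capacity $W \in \mathbb{N}$, I would build the sensor-optimization instance with $n$ sensors, costs $c_i = w_i$, budget $C = W$, and dropout probabilities $d_i = d$ for a common fixed $d \in (0,1)$. For the MDP $E$ I would use a one-step task whose state $s = (s^1, \ldots, s^n)$ has independent coordinates drawn uniformly from a finite alphabet $\Sigma$, with sensor $i$ observing $o^i = s^i$ when available and $o^i = 0$ upon dropout. The action space is $\Sigma^n$, the reward is $r(s,a) = \sum_i v_i \cdot \mathbf{1}[a^i = s^i]$, and the policy $\pi$ copies each nonzero observation coordinate into the action while playing an arbitrary fixed symbol when the coordinate is $0$. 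Because the dropouts are independent and the rewards decompose over coordinates, the expected return evaluates to $\mathbb{E}_{d,\pi,x}[R] = \sum_i v_i \bigl(1 - d^{\,1+x_i}\bigr) + \varepsilon_n$, where $\varepsilon_n$ accounts for correctly guessing a masked coordinate by chance. Grouping by $x$ yields $\mathbb{E}_{d,\pi,x}[R] = \text{const} + (d - d^2)\sum_i v_i x_i + \varepsilon_n$.

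Since $d - d^2 > 0$, maximizing this quantity under the budget constraint is equivalent to solving the original Knapsack instance, provided $\varepsilon_n$ does not interfere with the ranking of solutions. Hence any polynomial-time algorithm for $(2)$ would yield one for $0$-$1$ Knapsack, establishing NP-hardness.

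The main obstacle I anticipate is the spurious reward $\varepsilon_n$ contributed by random guessing when a sensor is dropped, since it depends on $x$ in a nontrivial way through the dropout pattern. I would neutralize it in one of two ways: either choose $|\Sigma|$ large enough (polynomial in the encoding length) that $\varepsilon_n$ is strictly smaller than the smallest possible gap between two distinct knapsack objective values, which exists because all inputs are integers and the configuration space is finite; or enlarge the action space with a deterministic ``abstain'' action of reward $0$ that $\pi$ selects whenever it sees $o^i = 0$, eliminating the guessing term entirely. Either variant keeps the reduction polynomial in the bit-length of the Knapsack input and yields the desired equivalence, completing the proof.
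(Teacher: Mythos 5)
Your proposal is correct and targets the same source problem as the paper (a reduction from $0$--$1$ Knapsack), but the construction is genuinely different and, in fact, more careful than the paper's. The paper sets all dropout probabilities to $d^{(2)}=0$ and then declares, ``for each possible $x$,'' an MDP $E_x$ whose reward is $r(s_0,a_0)=x\cdot v$; this is formally awkward, since the MDP is part of the \emph{fixed} problem instance while $x$ is the decision variable, and with $d=0$ no sensor ever drops out, so a backup configuration cannot influence the dynamics at all --- the dependence of the return on $x$ is asserted rather than derived from the dropout mechanism. You instead take $d\in(0,1)$ and build a one-step MDP with a coordinatewise ``copy the observation'' reward, so that the expected return genuinely becomes $\mathrm{const}+(d-d^2)\sum_i v_i x_i$ \emph{through} the backup semantics $d_i\mapsto d_i^2$; since $d-d^2>0$, optimizing $(2)$ under the budget $\sum_i x_i c_i\le C$ is exactly Knapsack. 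This is the reduction the paper's argument is gesturing at, made rigorous. Two small remarks: (i) your first fix for the guessing term (taking $|\Sigma|$ larger than $\sum_i v_i$ divided by the minimum objective gap) is pseudo-polynomial rather than polynomial in the bit-length of the Knapsack input, so you should rely on the ``abstain'' variant, which keeps the instance description polynomial; (ii) the worry is actually unnecessary, because the guessing contribution equals $\sum_i v_i d^{1+x_i}/|\Sigma|$, so the total return is $\sum_i v_i\bigl(1-(1-1/|\Sigma|)\,d^{1+x_i}\bigr)$, which is still a strictly increasing affine function of $\sum_i v_i x_i$ and therefore preserves the ranking of solutions for any $|\Sigma|\ge 2$.
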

\begin{proof}
We can prove that the problem described in $(2)$ belongs to \textit{NP}-hard by reducing another problem $W$, that is already known to be \textit{NP}-hard, to it. Since this would imply that, if we have a polynomial algorithm for solving $(2)$, we could solve any problem instance $w \in W$ using this algorithm. We choose $W$ to be Knapsack \cite{salkin1975knapsack} which is known to be \textit{NP}-hard. For a given Knapsack instance $(n^{(1)}, v^{(1)}, c^{(1)}, C^{(1)})$ consisting of $n^{(1)}$ items with values $v^{(1)}$, costs $c^{(1)}$ and maximal costs $C^{(1)}$ we can reduce this to a problem instance $(n^{(2)}, d^{(2)}, c^{(2)}, C^{(2)}, E)$ from (2) via: $n^{(2)} = n^{(1)}$, $d^{(2)} = 0$, $c^{(2)} = c^{(1)}$, $C^{(2)} = C^{(1)}$. For each possible $x$ we define an individual \textit{MDP} $E_x$: $S = \{s_0, s_{terminal}\}, A = \{a_0\}, T(s_{terminal} | s_0, a_0) = 1, r(s_0,a_0) = x \cdot v$, $p_0(s_0) = 1$, $\gamma = 1$. The expected return $\mathbb{E}_{d,\pi,x}[R]$ will therefore be $\mathbb{E}_{d,\pi,x}[R] = x \cdot v$.
\end{proof}

\section{\uppercase{Related Work}}

Robust Reinforcement Learning \cite{moos2022robust,pinto2017robust} is a sub-discipline within RL that tries to learn robust policies in the face of several types of perturbations in the Markov Decision Process which are (1) \textit{Transition Robustness} due to a probabilistic transition model $T(s_{t+1} | s_t, a_t)$ (2) \textit{Action Robustness} due to errors when executing the actions (3) \textit{Observation Robustness} due to faulty sensor data.

There is a bulk of papers dealing with \textit{Observation Robustness} \cite{lutjens2020certified,mandlekar2017adversarially,pattanaik2017robust,zhang2020robust}. However, the vast majority is about handling noisy observations, meaning that the true state lies within an $\epsilon$-ball of the perceived observation. In these scenarios, an adversarial \textit{antagonist} is usually used in the training process that adds noise to the observation, trying to decrease the expected return, while the agent tries to maximize it \cite{liang2022efficient,lutjens2020certified,mandlekar2017adversarially,pattanaik2017robust,zhang2020robust}. In our problem setting, however, the sensors are not noisy in the sense of $\epsilon$-disturbance, but they can drop out entirely.

Another line of research is about the framework \textit{Action-Contingent Noiselessly Observable Markov Decision Processes} (ACNO-MDP) \cite{nam2021reinforcement,koseoglu2020miss}. In this framework, observing (measuring) the current state $s_t$ comes with costs. The agent therefore needs to learn when observing a state is crucially important for informed action decision making and when not. The action space is extended by two additional actions \textit{\{observe, not observe\}} \cite{krale2023act,beeler2021dynamic}. \textit{Not observe}, however, affects all sensors.

\begin{figure*}[t!]
\centering
\minipage{0.77\textwidth}
  \centering
  \includegraphics[width=\linewidth]{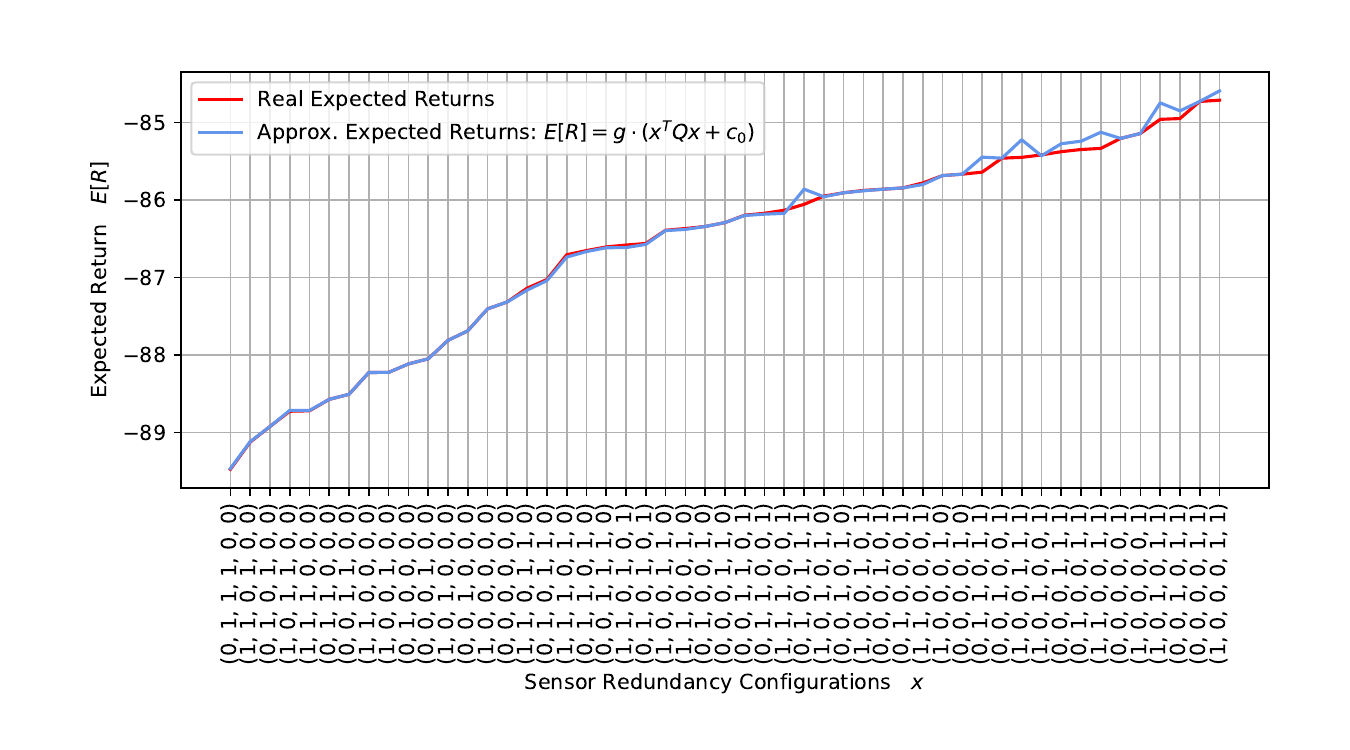}
  \includegraphics[width=\linewidth]{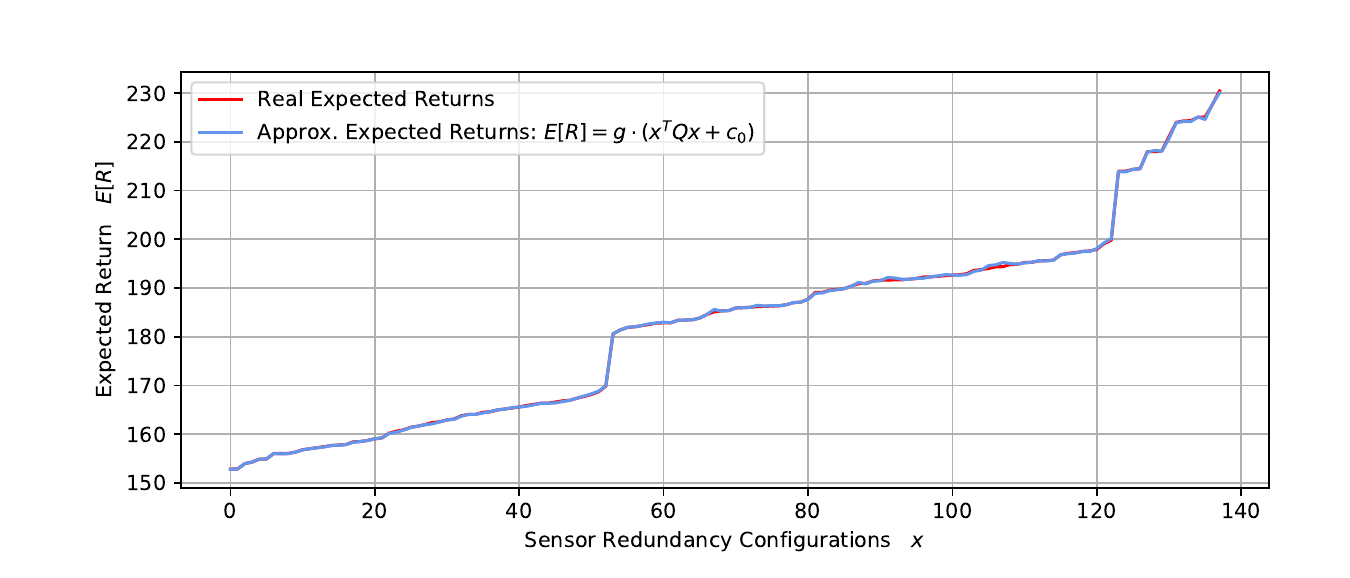}
  \includegraphics[width=\linewidth]{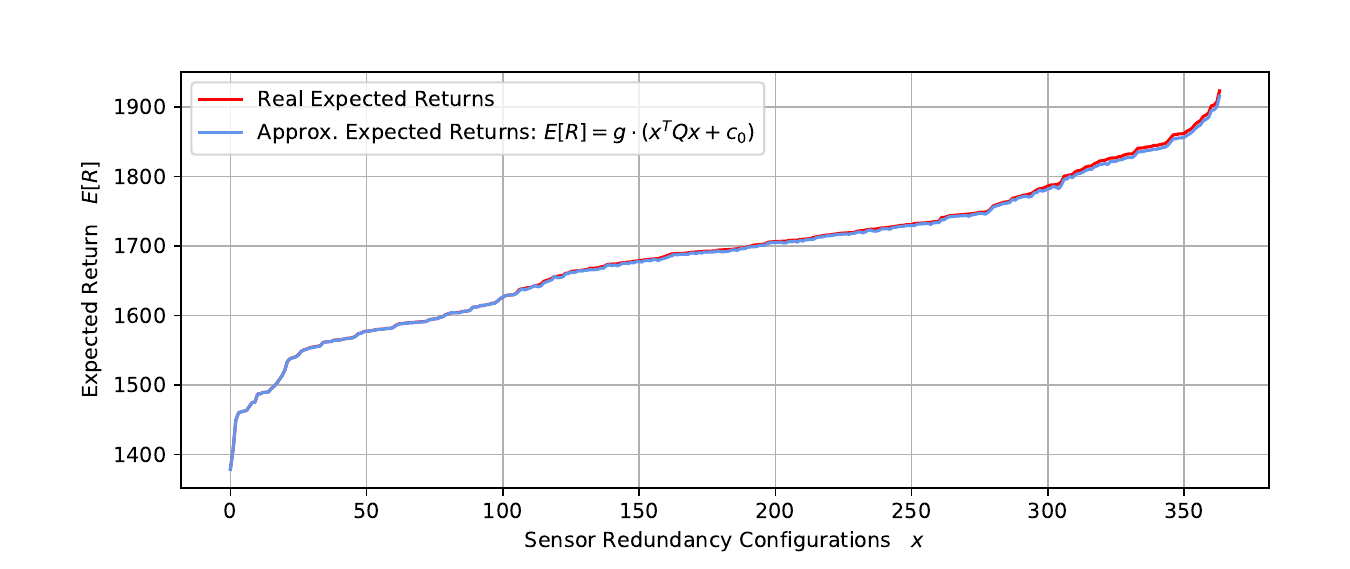}
  \caption{Solution landscapes of all possible backup sensor configurations for \textit{Acrobot-v1} (upper), \textit{LunarLander-v2} (middle) and \textit{Hopper-v2} (lower)}\label{fig:main_charts}
\endminipage
\end{figure*}

Important other related work comes from \textit{Safe Reinforcement Learning} \cite{gu2022review,dulac2019challenges}. In \cite{dulac2019challenges} the problem of sensors dropping out is already mentioned and analyzed. However, they only examine the scenario in which the sensor drops out for $z \in \mathbb{N}$ timesteps. Therefore they can use Recurrent Neural Networks for representing the policy that mitigates this problem.

Our approach for optimizing the sensor redundancy configuration given a maximal cost $C$ bears similarity to Knapsack \cite{salkin1975knapsack,martello1990knapsack}. In \cite{16} a QUBO formulation for Knapsack was already introduced and \cite{quintero2021characterizing} provides a study regarding the trade-off parameter for the hard- and soft-constraint. The major difference between the original Knapsack problem and our problem is that in Knapsack, the value of a collection of items is the sum of all item values: $W = \sum_i w_i x_i$. In our problem formulation this, however, doesn't hold. We therefore proposed a second-order approximation for representing the soft-constraint.

To the best of our knowledge, this paper is the first to address the optimization of sensor redundancy in sequential decision-making environments.

\section{\uppercase{Experiments}}

In this \textit{Experiments} section we want to evaluate our algorithm \textit{SensorOpt}. Specifically, we want to examine the following hypotheses:
\begin{enumerate}
    \item Hamiltonian $H$ of formula $(3)$ approximates the real expected returns.
    \item Our algorithm \textbf{SensorOpt} can find optimal sensor backup configurations.
    \item \textbf{Algorithm 1} approximates the expected returns $R_{(i,j)}$ faster than with \textit{Round Robin}.
\end{enumerate}

\subsection{Proof-of-Concept}

To test how well our QUBO formulation (a second-order approximation) approximates the real expected return for each possible sensor redundancy configuration $x$ we created a random problem instance, see Table 1, and determined the \textit{Real Expected Returns} $\mathbb{E}_{d,\pi,x}[R]$ and the \textit{Approximated Expected Returns}:

\[ \mathbb{E}_{d,\pi,x}[R] \approx - \: x^T \: Q \: x + \hat{R}(d)\]
\ \\
Note that in general a QUBO-matrix can be linearly scaled by a scalar $g$ without altering the order of the solutions regarding solution quality. \textit{Figure 2} shows the two resulting graphs where the x-axis represents all possible sensor backup configurations $x$ and the y-axis the expected return. We have sorted the configurations according to their real expected returns. The key finding in this plot is that the approximation is quite good even though it is not perfect for all $x$. But the relative performance is still intact, meaning especially that the best solution in our approximation is also the best solution regarding the real expected returns.

\vskip 0.15in
\begin{table}[h]
\caption{Parameters defining the proof-of-concept problem. Figure 2 shows the approximated and the real expected returns for each backup sensor configuration $x$.}
\begin{center}
\begin{tabular}{l|p{70mm}}  
\toprule
$n$    &   $5$ \\
$c$ &    $[4, 5, 3, 4, 2]$ \\
$C$    &   $390$ \\
$d$    &  $[0.09, 0.08, 0.1, 0.085, 0.095]$ \\
$R_0$     &    $10$ \\
$R_{ij}$      &  $\{(0, 0): 9, (0, 1): 4, (0, 2): 1, (0, 3): 4, (0, 4): 5, (1, 1): 9, (1, 2): 3, (1, 3): 5, (1, 4): 3, (2, 2): 7, (2, 3): 3, (2, 4): 2, (3, 3): 8, (3, 4): 4, (4, 4): 8\}$\\
\bottomrule
\end{tabular}
\end{center}
\end{table}
\ \\
We can therefore verify the first hypothesis that our Hamiltonian $H$ approximates the real expected returns. But as with any approximation, it can contain errors.

\begin{table*}[h!]
\minipage{1\textwidth}
\caption{Results of our algorithm \textit{SensorOpt} on $9$ different environments compared against the true optimum, determined via brute force and the baselines when using no additional backup sensors and all possible backup sensors. We solved the QUBO created in \textit{SensorOpt} using Tabu Search. Note that \textit{All Backups} is not a valid solution to the problem presented in formula (2), since the cost of using all backup sensors exceeds the threshold $C$. Due to the exponential complexity of conducting a brute force search for the optimal configuration, it was feasible to apply this approach only to the first four environments.}
\label{sample-table}
\vskip 0.01in
\begin{center}
\begin{small}
\begin{sc}
\begin{tabular}{lccccr}
\toprule
Environment & SensorOpt & Optimum* & No Backups & All Backups \\
\midrule
CartPole-v1     &  \textbf{493.3}&  \textbf{493.3}&  424.0& 498.7 \\
Acrobot-v1    &  \textbf{-86.73}&  \textbf{-86.73}&  -91.38& -83.01 \\
LunarLander-v2   &  \textbf{241.5}&  \textbf{241.5}&  200.1& 249.5 \\
Hopper-v2    &  \textbf{2674}&  \textbf{2674}&  1979& 3221 \\
HalfCheetah-v2 &  \textbf{9643}&  -&  8119& 10711 \\
Walker2D-v2    &  \textbf{3388}&  -&  2402& 4097 \\
BipedalWalker-v3      &  \textbf{215.5}&  -&  160.2& 273.0\\
Swimmer-v2      &  \textbf{329.6}&  -&  272.6& 341.7 \\
RobotArmGrasping   & \textbf{29.41}& -& 22.27&30.41 \\
\bottomrule
\end{tabular}
\end{sc}
\end{small}
\end{center}
\endminipage
\end{table*}

\subsection{Main experiments}

Next, we tested if \textit{SensorOpt} can find optimal sensor backup configurations in well-known MDPs. We used $8$ different OpenAI Gym environments \cite{brockman2016openai}. Additionally, we created a more realistic and industry-relevant environment \textit{RobotArmGrasping} based on Unity \cite{juliani2018unity} (see \textit{Figure 4 and 5}). In this environment, the goal of the agent is to pick up a cube and lift it to a desired location. The observation space is $20$-dimensional and the continuous action space is $5$-dimensional. More details about this environment can be found in subsection 5.3.
\ \\
In all experiments, we sampled each dropout probability $d_i$, cost $c_i$, and maximum costs $C$ uniformly out of a fixed set. For simplicity, we restricted the costs $c_i$ in our experiments to be integers.

\ \\
For the $8$ OpenAI Gym environments we used as the trained policies SAC- (for continuous action spaces) or PPO- (for discrete action spaces) models from \textit{Stable Baselines 3} \cite{stable-baselines3}. For our \textit{RobotArmGrasping} environment we trained a PPO agent \cite{schulman2017proximal} for 5M steps. We then solved the sampled problem instances using \textit{SensorOpt}. We optimized the QUBO in \textit{SensorOpt} using Tabu Search, a classical meta-heuristic. Further, we calculated the expected returns when using no backup sensors (\textit{No Backups}) and all $n$ backup sensors (\textit{All Backups}). For the four smallest environments \textit{CartPole}, \textit{Acrobot}, \textit{LunarLander} and \textit{Hopper} we also determined the true optimal backup sensor configuration $x^*$ using brute force. This was however not possible for larger environments.

For each environment, we sampled 10 problem instances. Note that using all backup sensors is not a valid solution regarding the problem definition in formula $(2)$ since the costs would exceed the maximally allowed costs $\sum_i c_i > C$. All results are reported in \textit{Table 2}.

The results show that \textit{SensorOpt} found the optimal sensor redundancy configuration in the first four environments for which we were computationally able to determine the optimum using brute force. In \textit{Figure 3} we have plotted the approximated and the real expected returns for the environments \textit{Acrobot-v1}, \textit{LunarLander-v2} and \textit{Hopper-v2} in a similar way to the proof-of-concept.

\subsection{RobotArmGrasping environment}

The \textit{RobotArmGrasping} domain is a robotic simulation based on a digital twin of the \textit{Niryo One}\footnote{\url{https://niryo.com/niryo-one/}} which is available in the \textit{Unity Robotics Hub}\footnote{\url{https://github.com/Unity-Technologies/Unity-Robotics-Hub}}.
It uses Unity's built-in physics engine and models a robot arm that can grasp and lift objects.

\vskip 0.1in
\begin{figure}[h!]
  \centering
  \includegraphics[width=0.9\linewidth]{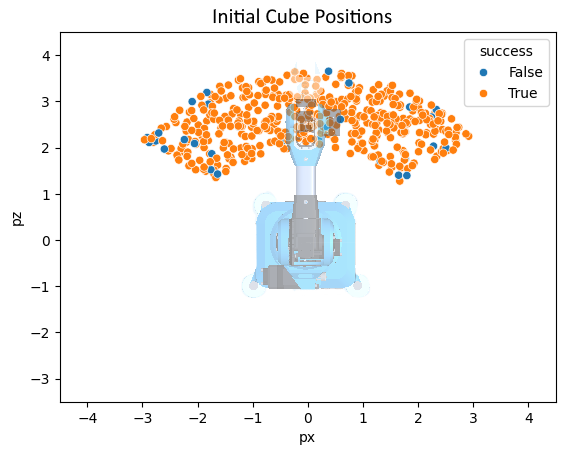}
  \caption{Illustration of the random cube positioning in the \textit{RobotArmGrasping} domain during training and evaluation. In this example, we traced 500 initial cube coordinates during the evaluation of an PPO agent. The color of the dots indicates whether the cube could successfully be grasped at these initial positions.}\label{fig:lift5d_initial_cube_positions}
\end{figure}
\vskip 0.1in

The robot arm is mounted to a table and consists of several controllable joints and a gripper.
The task consists of two steps.
First, the gripper has to be maneuvered close to the target object, a cube, which will cause the gripper to close automatically. 
If the gripper is positioned favorably, it will securely grasp the cube.
Second, the cube must be lifted upwards into a target area.
The robot arm's starting position is fixed.
The cube spawns on the table, randomly rotated, at a random position within a half circle around the robot arm (see Fig.~\ref{fig:lift5d_initial_cube_positions}).
The time for this task is limited to 500 steps.
The episode is reset if (a) the cube is lifted successfully into the target area, (b) the time limit is reached, (c) the robot hits either the table or its base, or (d) the cube is pushed out of the robot arm's range.

\begin{figure*}[t!]
\centering
\minipage{0.95\textwidth}
  \centering
  \includegraphics[width=\linewidth]{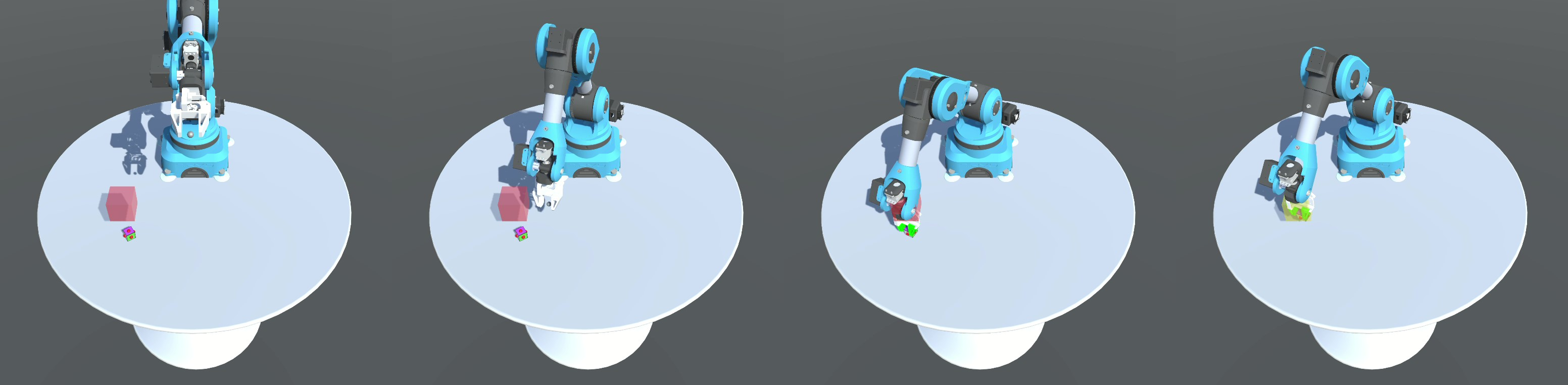}
  \caption{An illustration of our environment \textit{RobotArmGrasping}. }\label{fig:main_charts}
\endminipage
\end{figure*}

\begin{figure*}[t!]
\centering
\minipage{1\textwidth}
  \centering
  \includegraphics[width=\linewidth]{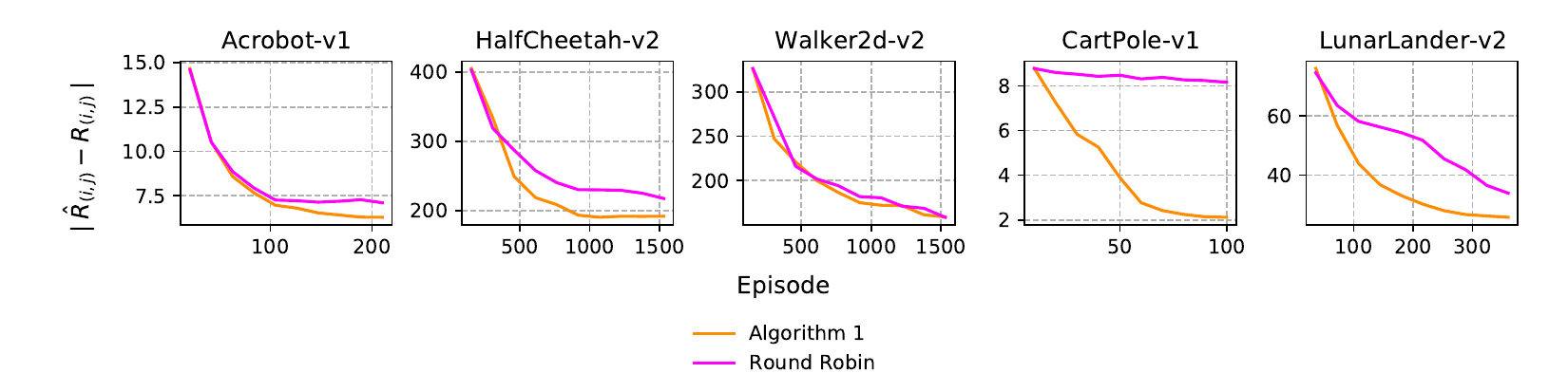}
  \caption{This figure shows a comparison between our \textit{Algorithm 1} and a \textit{Round Robin} approach for approximating the expected return $\hat{R}_{(i,j)}$ when dropping out sensors $s_i$ and $s_j$. The results indicate that it can be advantageous to prioritize pairs $(i,j)$ if their expected return $\hat{R}_{(i,j)}$ has a slow convergence rate and therefore dissecting aleatoric from epistemic uncertainty \cite{valdenegro2022deeper}.}\label{fig:main_charts}
\endminipage
\end{figure*}

To fulfill this task, the decision-making agent has access to the target position of 5 revolute joints: shoulder, arm, elbow, forearm, and wrist. We model this as a 5-dimensional, continuous vector $\in [-1;1]$. The agent may set a new target position per joint per step.
Each joint will then try to reach its target position with respect to its physical limits, e.g. its motor's maximum force, its dampening, its friction, and so on. Consequently, reaching the target may take (significantly) longer than one step.
We measure the decision-making agent's performance by a score function that evaluates the current simulation state. A random agent achieves a score of approx. $-20$ and has a success rate of $\ll 0.01$.
An agent trained with a reference implementation of PPO using SDE~\cite{stable-baselines3} for 5M steps achieves a score of $\thicksim15$ and a success rate of $\thicksim 0.94$.
Per step, we use the difference between the last and the current score as the reward.
Our trained PPO agent typically collects an (undiscounted) episode return of $\thicksim30$ if we allow collisions during evaluation.
Training such an agent takes approximately 4 hours (wall clock time) on a workstation equipped with an AMD Ryzen 7 5800X and 32 GB RAM. Figure 5 shows a visualization of a grasping attempt.

\subsection{Algorithm 1 vs. Round Robin}

\textit{Algorithm 1} is used to estimate the expected return $\hat{R}_{(i,j)}$ when dropping out sensors $s_i$ and $s_j$. As already mentioned in the \textit{Algorithm} section a naive way for doing that would be to use a \textit{Round Robin} approach \cite{rasmussen2008round} that divides the budget of episodes $B$ fairly among all $n(n+1)/2$ combinations. However, the larger the variance between the sampled returns for a fixed $(i,j)$, the more uncertain we are about the actual expected return. Deciding which $(i,j)$ to sample next from purely based on the variance of the returns is also not appropriate for estimating the mean due to aleatoric uncertainty. \textit{Algorithm 1} therefore decides which $(i,j)$ to sample next from based on the largest previous shift of the mean (momentum) since this is an indicator of how well the expected return $\hat{R}_{(i,j)}$ has converged. The third hypothesis we want to evaluate is ``Are we approximating $\hat{R}_{(i,j)}$ with \textit{Algorithm 1} faster than with \textit{Round Robin}?''. To test this hypothesis, we simply applied both approaches on $5$ environments and $10$ random problem instances each. As budget $B$ we chose $B = 10 \cdot n(n+1)/2$. So \textit{Round Robin} did evaluate each pair $(i,j)$ exactly $10$ times. As a metric, we measured the distance between the predicted expected return $\hat{R}_{(i,j)}$ and the actual expected return, determined by $100$ episodes. The result, see \textit{Figure 6}, suggests that the hypothesis is indeed true.

\section{\uppercase{Conclusion}}

Sensor dropouts present a major challenge when deploying reinforcement learning (RL) policies in real-world environments. A common solution to this problem is the use of backup sensors, though this approach introduces additional costs. In this paper, we tackled the problem of optimizing backup sensor configurations to maximize expected return while ensuring the total cost of added backup sensors remains below a specified threshold, $C$.

Our method involved using a second-order approximation of the expected return, $\mathbb{E}_{d,\pi,x}[R] \approx -x^T Q x + \hat{R}(d)$, for any given backup sensor configuration $x \in \mathbb{B}^n$. We incorporated a penalty for configurations that exceeded the maximum allowable cost, $C$, and optimized the resulting QUBO matrices $Q$ using the Tabu Search algorithm.

We evaluated our approach across eight OpenAI Gym environments, as well as a custom Unity-based robotic scenario, \textit{RobotArmGrasping}. The results demonstrated that our quadratic approximation was sufficiently accurate to ensure that the optimal configuration derived from the approximation closely matched the true optimal sensor configuration in practice.

\bibliographystyle{apalike}
{\small
\bibliography{example}}

\end{document}